\documentclass[conference]{IEEEtran}
\usepackage{cite}
\usepackage{amsmath,amssymb,amsfonts,amsthm}
\usepackage{graphicx}
\usepackage{booktabs}
\usepackage{multirow}
\usepackage{url}

\newtheorem{proposition}{Proposition}
\newtheorem{corollary}{Corollary}

\makeatletter
\def\thm@space@setup{\thm@preskip=2pt plus 1pt minus 1pt\thm@postskip=2pt plus 1pt minus 1pt}
\makeatother

\begin{document}

\title{Neither Silence nor Overlap Is Failure: Intent-Conditioned Evaluation of Turn-Taking in Full-Duplex Spoken Dialogue Models}

\author{\IEEEauthorblockN{Kian Shamsaie, Iman Modarressi}
\IEEEauthorblockA{People Make Things\\
\textit{\{k,i\}@peoplemakethings.com}}}
\maketitle

\begin{abstract}
Benchmarks for full-duplex spoken dialogue models score turn-taking with binary fixed-window rules that reward immediate response or silence by completeness of the prior turn. We argue that the appropriateness of a response offset, whether delayed silence or anticipatory overlap, is conditional on the speaker's latent intent, identifiable only from that speaker's behavior. We introduce TACT, a benchmark of 9,728 episodes and 73.2 hours from five dyadic corpora; each episode carries dialogue history, a per-speaker memory profile, and an annotator-derived posterior over six intent classes. Scoring replaces binary windows with a strictly proper threshold-weighted continuous ranked probability score whose weights are intent-conditioned timing kernels fitted to human floor-transfer-offset distributions, proving boundedness, consistency, and binary reduction. Across eleven systems the best model reaches 0.47 against a human topline of 0.86, is nearly invariant to speaker profiles, and TACT agrees with human judgments at Spearman 0.81 versus 0.46 for binary metrics.
\end{abstract}

\begin{IEEEkeywords}
full-duplex spoken dialogue models, turn-taking, benchmark evaluation, intent recognition, theory of mind, proper scoring rules
\end{IEEEkeywords}

\section{Introduction}
\label{sec:intro}

Full-duplex spoken dialogue models (SDMs) listen and speak simultaneously, deciding continuously when to take the floor, yield it, backchannel, or remain silent \cite{nguyen2023generative,defossez2024moshi,veluri2024beyond,lin2022duplex}. Evaluation of these decisions has relied on binary event detection inside fixed look-ahead windows: Full-Duplex-Bench scores whether a model takes over within a window after a pause or turn end and how fast \cite{lin2025full}, its successor extending the same logic to overlap \cite{lin2026full}. Under such metrics, a response inside the window is correct, silence after completion is a miss, and incoming speech before completion is an intrusion, whatever the interlocutor was doing with that silence or inviting with that incompleteness.

This paper begins from a two-sided observation well established in conversation analysis but absent from machine evaluation: silence after a completed turn is not intrinsically a failure, and overlapping talk before completion is not intrinsically a violation. Human floor transfer offsets concentrate around 200~ms \cite{stivers2009universals,levinson2015timing,heldner2010pauses}, yet long gaps are routinely tolerated when the prior turn was rhetorical, floor-holding, or abandoned, or when the recipient is expected to reflect \cite{sacks1974simplest,schegloff2000overlapping,kendrick2015timing}; symmetrically, backchannels, collaborative completions, and urgent clarifications are routinely launched in overlap, before any turn end exists \cite{yngve1970getting,ward2000prosodic}. The value of a response offset therefore depends on the speaker's latent intent: the same trailing-off question followed by 1.5~s of silence can demand an immediate answer from one speaker and forbid one from another, and the same mid-clause incoming can be cooperative or disruptive. Crucially, intent is often identifiable only from that speaker's behavior across interactions, because speakers differ systematically in pause tolerance, backchannel solicitation, and floor-holding style \cite{reece2023candor,gravano2011turn}; an evaluation blind to intent and memory rewards the degenerate policy of answering everything quickly and never coming in early, on which current SDMs have converged.

We operationalize this observation in \emph{TACT} (Theory-of-mind Aware Conversational Timing), which supersedes fixed-window evaluation in three ways. First, every episode carries long multi-turn history, a per-speaker memory profile assembled from other episodes of the same speaker, and a six-class latent intent $z$ annotated by at least three humans and fused with a calibrated large-language-model judge into a posterior $\hat q(z \mid e)$. Second, scoring is continuous, two-sided, and strictly proper: a threshold-weighted continuous ranked probability score whose nonnegative weight functions $w_z(t)$ are derived from intent-conditioned asymmetric timing kernels $u_z(t)$ over the offset domain $\mathcal{T}$, so that negative offsets are anticipatory onsets overlapping the ongoing turn, fitted to human floor-transfer-offset distributions from CANDOR \cite{reece2023candor}, SSSD \cite{sheikh2025sssd}, and otoSpeech \cite{otospeech2025full} dyads, with analogous proper scores for length and prosody-weighted overlap; we prove boundedness, strict propriety with the human conditional law as unique maximizer, reduction of Full-Duplex-Bench to a degenerate weight limit, and identifiability of the intent mixture. Third, the benchmark is realistic in scale and provenance: 9,728 episodes and 73.2 hours from five ecologically diverse public dyadic corpora, plus synthetic probes for rare conditions.

Evaluating eleven frontier systems, we find that models appearing adequate under binary metrics collapse under intent conditioning. The best attains a composite of 0.47 against a human topline of 0.86, the ranking reorders substantially relative to a reproduced Full-Duplex-Bench composite (Spearman rank correlation 0.55), and all systems exhibit an \emph{over-eagerness} pathology whose rigidity is two-sided: near-uniform fast responding regardless of intent, paired with an inability to launch cooperative early onsets where humans routinely overlap. A memory-swap ablation shows that model behavior is essentially invariant to the speaker profile while held-out human behavior is not, and the TACT composite correlates with held-out human judgments at Spearman $\rho=0.81$ versus $0.46$ for the binary metrics it replaces.

\section{Related Work}
\label{sec:related}

Turn allocation was formalized by Sacks, Schegloff, and Jefferson \cite{sacks1974simplest}; smooth transfers with modal gaps near 200~ms are a cross-linguistic universal \cite{stivers2009universals} requiring predictive planning of turn ends \cite{levinson2015timing,deruiter2006projecting} from lexico-syntactic and intonational completion cues \cite{bogels2015listeners,gravano2011turn}. The floor-transfer-offset (FTO) distribution, long studied on telephone corpora \cite{godfrey1992switchboard}, is heavy-tailed and well fitted by ex-Gaussian forms tracking sequence organization and preference \cite{roberts2015effects,kendrick2015timing,heldner2010pauses}; reflective responses are delayed, rhetorical or floor-holding turns license long silence \cite{schegloff2000overlapping}, and backchannels occupy an overlap-permissive regime cued by prosody \cite{yngve1970getting,ward2000prosodic}: timing is a conditional distribution given intent, the structure our scoring adopts, with a taxonomy motivated by \cite{stolcke2000dialogue,clark1996using}. Computational treatments evolved from finite-state floor control \cite{raux2009finite,schlangen2011general} to self-supervised prediction: TurnGPT predicts turn shifts from lexical context \cite{ekstedt2020turngpt}, Voice Activity Projection (VAP) predicts joint future voice activity from stereo audio \cite{ekstedt2022voice} with real-time and multilingual extensions \cite{inoue2024realtime,inoue2024multilingual}, and production endpointers train jointly with recognition \cite{chang2022turn,skantze2021turn}; these predictors supply machinery we repurpose for scoring.

Full-duplex behavior appeared commercially in XiaoIce and task-oriented stacks \cite{zhou2020design,lin2022duplex}; dGSLM introduced end-to-end two-channel modeling \cite{nguyen2023generative}, followed by Moshi's parallel speech-text streams \cite{defossez2024moshi}, time-multiplexing in SyncLLM \cite{veluri2024beyond}, duplex fine-tuning of text LLMs \cite{wang2025freeze}, listen-while-speaking objectives \cite{ma2025language}, a family of speech-native assistants \cite{zhang2023speechgpt,xu2025qwen,openbmb2025minicpmo}, persona-controllable models such as PersonaPlex \cite{roy2026personaplex}, and proprietary realtime stacks from GPT-4o Realtime to gpt-realtime-2 and Gemini 3.1 Flash Live \cite{openai2024gpt4o,openai2026gptrealtime2,google2026geminilive}. Benchmarks for such systems largely score semantic quality \cite{chen2024voicebench,ao2024sdeval}. Interaction-level evaluation began with Full-Duplex-Bench, which defined the four categories we retain via takeover rates and latencies inside fixed windows \cite{lin2025full}, and has since grown well beyond fixed local windows: v1.5 probes four overlap scenarios with prosodic-adaptation metrics \cite{lin2026full}, FD-Bench scaled simulated interruptions \cite{peng2025fdbench}, v2 adds a multi-turn automated examiner \cite{lin2026fdbv2}, v3 uses entirely real human audio annotated for five disfluency categories under multi-step tool use \cite{lin2026fdbv3}, and Talking Turns trains a supervised judge on human corpora \cite{arora2025talking}. These increasingly cover overlap, multi-turn structure, and real disfluent audio, yet each defines the appropriate behavior as a function of observed acoustic and dialogue context rather than the speaker's latent intent. TACT does not claim to be the first multi-turn or overlap-aware benchmark; its contribution is to condition the target on a latent, speaker-specific, theory-of-mind-and-memory intent posterior and score continuous timing with a strictly proper, asymmetry-aware rule \cite{gneiting2011comparing,gneiting2007strictly} rather than binary windows, calibrating the judge following \cite{zheng2023judging,liu2023geval}. Finally, attributing latent mental states is the classical theory-of-mind task \cite{premack1978does}, at which text probes show large models brittle \cite{sap2022neural,kim2023fantom}; persona and memory help text dialogue \cite{zhang2018personalizing,xu2022beyond}, yet no spoken-dialogue evaluation tests whether a system adapts its timing to a specific speaker.

\section{The TACT Benchmark}
\label{sec:bench}

\subsection{Episode Construction}
A TACT episode $e$ consists of a dual-channel audio context $A_e$ ending at a decision point, a time-aligned transcript history $H_e$ averaging 3.2 minutes and 11.4 turns, a behavioral memory profile $M_{s(e)}$ of the focal speaker $s(e)$, and annotations; episodes and profiles are mined from five public dyadic corpora and ratings from at least three annotators fuse with a calibrated LLM judge into the intent posterior conditioning the scoring. The decision point is the end of a focal-speaker turn (for pause handling, an intra-turn silence of at least 600~ms; for interruption, a region where the system holds the floor). The system receives the focal-speaker channel as streaming input with a rendered profile; its output is recorded over $[-T_a,T_h]$ with pre-horizon $T_a=2$~s (so anticipatory overlap onsets are observable) and horizon $T_h=5$~s, and each episode is run three times, keeping the median-scoring run.

Episodes are mined from five public corpora chosen for ecological diversity. CANDOR (1,656 dyadic video calls with demographics and surveys \cite{reece2023candor}) anchors the fairness audit (2,560 episodes, 19.8~h); SSSD (727 hours of crowdsourced spontaneous English dyads \cite{sheikh2025sssd}) supplies scale and speaker diversity (2,304 episodes, 17.1~h); otoSpeech-full-duplex-280h, raw 48~kHz stereo FLAC with one channel per speaker and a cleaned 141-hour companion we use \cite{otospeech2025full,otospeech2025processed}, preserves the overlap ground truth mixed-channel corpora destroy, essential for fitting anticipatory laws (1,792 episodes, 13.9~h); Seamless Interaction \cite{agrawal2025seamless} seeds interruption episodes (1,536 episodes, 11.5~h); and the AMI individual-headset subset, with characterized ASR baselines (Whisper near 16--17\% WER on AMI-IHM \cite{radford2023robust}), supplies the sole multiparty contrast mined as dyadic exchanges (768 episodes, 5.4~h) \cite{carletta2005ami}. Synthetic TTS probes (768 episodes, 5.5~h) cover conditions sparse in found data. In total TACT comprises 9,728 episodes and 73.2 hours over the four categories of \cite{lin2025full} (2,720 pause handling, 2,432 backchanneling, 2,592 smooth turn-taking, 1,984 user interruption), each crossed with the intent taxonomy.

\subsection{Speaker Memory Profiles}
For every focal speaker we assemble a profile $M_s$ from episodes of the same speaker disjoint from the test episode (mean 14.6 minutes of evidence), summarizing the FTO distribution, intra-turn pause statistics, abstention tolerance, backchannel solicitation and production rates, articulation rate, and floor-holding devices, rendered as structured features and a prompt synopsis. Profiles are the only channel through which speaker-specific expectations can flow; the swap ablation of Section~\ref{sec:results} exploits this.

\subsection{Intent Taxonomy and Annotation Protocol}
Each decision point is annotated with a latent intent $z$ over $\mathcal{Z}=\{$\textsc{ans}, \textsc{ref}, \textsc{rhe}, \textsc{hold}, \textsc{bck}, \textsc{abn}$\}$: immediate-answer-seeking, reflective, rhetorical, floor-holding, cooperative-incoming-inviting, and turn-abandonment, distilled from dialogue-act inventories \cite{stolcke2000dialogue} and the timing literature \cite{kendrick2015timing,schegloff2000overlapping} so that classes induce distinct human timing on both sides of the turn end; \textsc{bck} covers continuers, collaborative completions, and urgent clarifications, whose normative onset lies in overlap. Three to five annotators (median three), shown full context, profile, and human continuation, label each episode, rate silence acceptability, and mark acceptable onset windows on both sides. Agreement is Krippendorff $\alpha=0.73$ for intent (nominal), $0.84$ for onset windows, and $0.79$ for overlap acceptability \cite{krippendorff2004content}; the empirical intent distribution is 31.6\% \textsc{ans}, 18.2\% \textsc{hold}, 16.1\% \textsc{ref}, 15.3\% \textsc{bck}, 9.6\% \textsc{abn}, 9.2\% \textsc{rhe}. A temperature-scaled LLM judge is fused in (Section~\ref{sec:scoring}); calibration cuts its expected calibration error from 8.4\% to 2.1\%, and the fusion predicts a held-out annotator's label better than either source alone.

Because timing norms vary across populations \cite{stivers2009universals}, TACT is stratified by gender, age band, first-language region, signal-to-noise ratio, and articulation-rate tercile from CANDOR and Seamless metadata \cite{reece2023candor,agrawal2025seamless}, and kernels are fitted per slice where data permit, so the metric does not encode one population's norm as universal.

\section{Intent-Conditioned Continuous Scoring}
\label{sec:scoring}

\subsection{Notation and Intent Posterior}
Let $e$ denote an episode with decision point at time origin $t=0$, focal speaker $s(e)$, history $H_e$, and profile $M_{s(e)}$. The evaluated system's action is $y_e \in \mathcal{T} \cup \{\varnothing\}$ on the bounded offset domain $\mathcal{T}=[-T_a,T_h]$, with pre-horizon $T_a=2$~s and evaluation horizon $T_h=5$~s: $y_e$ is the onset offset $t$ of its first responsive vocalization within $\mathcal{T}$ (negative offsets being anticipatory onsets in overlap with the ongoing turn), or the no-response-within-horizon outcome $\varnothing$ (the atom we score explicitly). With $K_e$ annotator labels $z^{(1)},\dots,z^{(K_e)}$, the empirical posterior is $\hat q_{\mathrm{A}}(z \mid e) = K_e^{-1}\sum_{k} \mathbf{1}[z^{(k)} = z]$, and the calibrated posterior fuses it with an LLM judge $p_{\mathrm{J}}$ \cite{zheng2023judging,liu2023geval}, temperature-scaled by $T^{\star}$ fitted on the development split,
\begin{equation}
\hat q(z \mid e) \;=\; (1-\lambda)\,\hat q_{\mathrm{A}}(z \mid e) \;+\; \lambda\,
\frac{p_{\mathrm{J}}(z \mid e)^{1/T^{\star}}}{\sum_{z'} p_{\mathrm{J}}(z' \mid e)^{1/T^{\star}}},
\label{eq:post}
\end{equation}
with $\lambda=0.3$ chosen by held-out annotator log-likelihood. Intuitively, \eqref{eq:post} forms a Bayesian consensus belief over speaker intent by blending majority human annotations with a calibrated judge. The posterior is a theory-of-mind estimate of the speaker's state \cite{premack1978does,sap2022neural}: $H_e$ alone leaves intent ambiguous in 38\% of episodes, and conditioning on $M_{s(e)}$ resolves over half of these.

\subsection{Reference Laws and Weights for All Six Intents}
\label{sec:laws}
For each of the six intents $z\in\mathcal{Z}$ we specify a human reference law $h_z=(1-\beta_z)\,f_z+\beta_z\,\delta_{\varnothing}$ on $\mathcal{T}=[-T_a,T_h]$ with the no-response atom $\varnothing$. Here, $\beta_z\in[0,1]$ is the empirical human no-response probability, estimated directly as the mean annotator silence acceptability on training dyads, while $f_z$ is the conditional continuous FTO density fitted on responsive turns disjoint from test. Because $\beta_z$ and $f_z$ capture the discrete no-response atom and response-conditional timing respectively, they represent distinct, non-conflicting components of $h_z$, with inter-annotator label disagreements retained in the posterior $\hat q(z\mid e)$. Following \cite{roberts2015effects}, the responsive intents \textsc{ans}, \textsc{bck}, and \textsc{abn} use the ex-Gaussian density with parameters $(\mu_z,\sigma_z,\tau_z)$ equal to $(0.16,0.11,0.24)$, $(-0.32,0.18,0.20)$, and $(1.95,0.50,0.80)$~s; the \textsc{bck} mode of $-0.19$~s places $72\%$ of onset mass inside the ongoing turn. The reflective intent \textsc{ref} uses a shifted log-normal of median $1.9$~s. Because the optimal action under a rhetorical (\textsc{rhe}) or floor-holding (\textsc{hold}) turn is to withhold the floor, $f_{\textsc{rhe}}$ and $f_{\textsc{hold}}$ are heavy-tailed shifted log-normals of median $2.6$~s and $3.0$~s, so the rare licensed response is late and most reference mass sits on $\varnothing$. The fitted no-response masses, equal to the mean annotator acceptability of silence, are $\beta_{\textsc{ans}}=0.05$, $\beta_{\textsc{ref}}=0.40$, $\beta_{\textsc{rhe}}=0.85$, $\beta_{\textsc{hold}}=0.90$, $\beta_{\textsc{bck}}=0.45$, $\beta_{\textsc{abn}}=0.50$.

The asymmetric, intent-conditioned importance of different offset regions is carried by a nonnegative weight $w_z$ derived from the same kernel shape,
\begin{equation}
u_z(t) = \Big(\tfrac{f_z(t)}{\sup_{t'} f_z(t')}\Big)^{\!\gamma},
\quad
w_z(t) = (1-\epsilon)\,u_z(t) + \epsilon,
\label{eq:kernel}
\end{equation}
with $\gamma=0.7$ and $\epsilon=0.02$, so $w_z>0$ on $\mathcal{T}$ (guaranteeing strict propriety below) while preserving the kernel's asymmetry up to the floor $\epsilon$: for \textsc{ans} late offsets are weighted above premature overlap, while for \textsc{bck} a well-placed overlapping onset is weighted above any post-completion one (Fig.~\ref{fig:kernels}).

\subsection{Threshold-Weighted Continuous Ranked Probability Scoring}
\label{sec:twcrps}
A model on episodes with posterior mode $z$ emits a predictive law $P_z$ over $t\in\mathcal{T}$ with a no-response atom, summarized by its sub-probability CDF $P_z(t)=\Pr(y\le t)$, whose defect $1-P_z(T_h^-)=1-\lim_{t\uparrow T_h}P_z(t)$ represents the predicted no-response mass on $\varnothing$ at horizon $T_h$. We score $P_z$ against a realized continuation $y$ by the threshold-weighted continuous ranked probability score (twCRPS) of \cite{gneiting2011comparing}, built on the CRPS of \cite{matheson1976scoring,gneiting2007strictly},
\begin{equation}
\mathrm{twCRPS}(P_z,y) \;=\; \int_{\mathcal{T}} w_z(t)\,\big(P_z(t)-\mathbf{1}[\,y\le t\,]\big)^2\,dt,
\label{eq:twcrps}
\end{equation}
where for the atom outcome $y=\varnothing$ the indicator $\mathbf{1}[\,y\le t\,]=0$ for every finite $t\in\mathcal{T}$, so withheld floor and post-completion onset are scored by one object. Intuitively, twCRPS measures the integrated squared discrepancy between the model's predictive cumulative distribution and the empirical step function $\mathbf{1}[y\le t]$ of the realized event, weighted by $w_z(t)$ so that timing errors are penalized strictly in proportion to their communicative severity under intent $z$. The negatively oriented twCRPS induces the proper divergence $d_z(P_z\,\|\,h_z)=\mathbb{E}_{y\sim h_z}[\mathrm{twCRPS}(P_z,y)]-\mathbb{E}_{y\sim h_z}[\mathrm{twCRPS}(h_z,y)]$, and the population timing metric maps it to $[0,1]$ as
\begin{equation}
S_{\mathrm{T}} \;=\; \sum_{z\in\mathcal{Z}} \pi(z)\left(1-\frac{d_z(P_z\,\|\,h_z)}{W_z}\right),
\qquad W_z=\!\int_{\mathcal{T}}\! w_z(t)\,dt,
\label{eq:stime}
\end{equation}
with $\pi(z)=\Pr(z)$ the intent prior. The per-episode score uses the single-sample twCRPS against the posterior-mixed reference $P_{\hat q}=\sum_z\hat q(z\mid e)\,h_z$, namely $s_{\mathrm{T}}(e)=1-\mathrm{twCRPS}(P_{\hat q},y_e)/\overline{W}_e$ with $\overline{W}_e=\sum_z\hat q(z\mid e)W_z$; its episode mean estimates $S_{\mathrm{T}}$ up to the irreducible single-continuation variance reconciled below.

\begin{figure}[t]
\centering
\includegraphics[width=0.88\columnwidth]{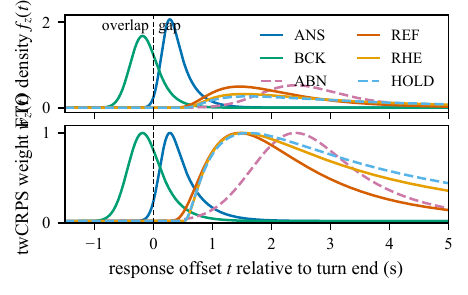}
\caption{Per-intent human FTO densities $f_z$ (top) and induced twCRPS weights $w_z(t)$ of \eqref{eq:kernel} (bottom) on $\mathcal{T}$: mass left of the turn end (dashed) is anticipatory overlap, the \textsc{bck} mode lies inside the ongoing turn, and \textsc{rhe}/\textsc{hold} place most reference mass on the no-response atom $\beta_z$ (dotted) with a heavy late tail.}
\label{fig:kernels}
\end{figure}

\subsection{Length, Overlap, and Composite}
Length is scored by the same proper construction: with $g_z$ the human duration density given intent and context bucket and $Q_z$ the model's predictive duration CDF, the aggregate is the bounded complement of the duration twCRPS divergence with the uniform weight $w\equiv1$ on the duration support,
\begin{equation}
S_{\mathrm{L}} \;=\; \sum_{z} \pi(z)\left(1 - \frac{d_z^{\mathrm{L}}(Q_z\,\|\,g_z)}{W^{\mathrm{L}}}\right),
\label{eq:length}
\end{equation}
where $d_z^{\mathrm{L}}$ is the CRPS divergence of \cite{matheson1976scoring} and $W^{\mathrm{L}}$ the length of the duration support, so $S_{\mathrm{L}}\in[0,1]$ and equals $1$ uniquely at $Q_z=g_z$. Overlap is scored with prosodic-completion weighting: a VAP-style estimator \cite{ekstedt2022voice,inoue2024realtime}, calibrated on annotator completion marks, assigns each instant $\tau$ of focal-speaker speech a completion probability $\psi(\tau)\in[0,1]$, and competitive (non-backchannel) overlap onsets $o$ with durations $d_o$ incur
\begin{equation}
s_{\mathrm{O}}(e) \;=\; \exp\!\Big(-\kappa \sum_{o\in O_e} \big(1-\psi(\tau_o)\big)\, d_o\Big),
\qquad \kappa = 1.2~\mathrm{s}^{-1},
\label{eq:overlap}
\end{equation}
so overlap at points of high prosodic completion, where competitive incoming is licensed \cite{bogels2015listeners,gravano2011turn}, is barely penalized while mid-clause interruption is penalized in proportion to duration; backchannels are exempt \cite{lin2025full}. Paired with the \textsc{bck} reference law's anticipatory mass, cooperative overlap is a high-scoring action while disruptive interruption remains costly: neither silence nor overlap is a category, only an action under a posterior. Each category aggregates its components as a fixed convex combination (timing dominates pause handling and smooth turn-taking, overlap dominates interruption, timing and length share backchanneling); the composite $S$ is the unweighted mean of the four category scores, with BCa bootstrap intervals over $10^4$ resamples \cite{efron1993introduction}.

\subsection{Formal Guarantees}

\begin{proposition}[Boundedness and strict propriety]
\label{prop:bound}
Fix an intent $z$ with $\pi(z)>0$ and weight $w_z$ strictly positive on the interior of $\mathcal{T}$ and integrable, with $W_z=\int_{\mathcal{T}}w_z<\infty$. For every predictive law $P_z$ on $\mathcal{T}\cup\{\varnothing\}$ and every realized $y\in\mathcal{T}\cup\{\varnothing\}$, $\mathrm{twCRPS}(P_z,y)\in[0,W_z]$, so the per-intent term and hence $S_{\mathrm{T}}$ of \eqref{eq:stime} lie in $[0,1]$. Moreover $\mathbb{E}_{y\sim h_z}[\mathrm{twCRPS}(P_z,y)]$ is minimized over all $P_z$ uniquely at $P_z=h_z$ (the twCRPS is strictly proper), so $S_{\mathrm{T}}=1$ if and only if $P_z=h_z$ for all such $z$; in particular every deterministic forecast and every law supported on the post-completion half-line incurs $S_{\mathrm{T}}<1$ whenever $h_z$ is dispersed or carries anticipatory mass.
\end{proposition}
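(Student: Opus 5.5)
Boundedness follows pointwise from the integrand in \eqref{eq:twcrps}, and strict propriety from the standard bias--variance split of the expected CRPS integrand. For boundedness, fix $P_z$ and $y$. For every $t\in\mathcal{T}$ both $P_z(t)$ (a sub-probability CDF value) and $\mathbf{1}[y\le t]$ lie in $[0,1]$, so $(P_z(t)-\mathbf{1}[y\le t])^2\in[0,1]$. The convention $\mathbf{1}[y\le t]=0$ for $y=\varnothing$ does not change this. Multiplying by $w_z\ge 0$ and integrating gives $0\le \mathrm{twCRPS}(P_z,y)\le \int_{\mathcal{T}}w_z=W_z$. Taking expectations under $h_z$ and subtracting the nonnegative self-score gives $0\le d_z(P_z\|h_z)\le W_z$; the lower bound is postponed to the propriety step. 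Hence each bracket $1-d_z/W_z\in[0,1]$, and the convex combination with weights $\pi(z)$ puts $S_{\mathrm{T}}\in[0,1]$.

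For propriety, I will use Fubini (legitimate since the integrand is bounded and $W_z<\infty$) to write
\begin{equation*}
\mathbb{E}_{y\sim h_z}\big[\mathrm{twCRPS}(P_z,y)\big]=\int_{\mathcal{T}} w_z(t)\,\mathbb{E}_{y\sim h_z}\big(P_z(t)-\mathbf{1}[y\le t]\big)^2\,dt .
\end{equation*}
Since $\mathbb{E}\,\mathbf{1}[y\le t]=H_z(t)$, the sub-probability CDF of $h_z$ (the atom is again counted as exceeding every $t$), the inner expectation equals $(P_z(t)-H_z(t))^2+H_z(t)(1-H_z(t))$. Subtracting the same expression at $P_z=h_z$ gives
\begin{equation*}
d_z(P_z\|h_z)=\int_{\mathcal{T}} w_z(t)\,\big(P_z(t)-H_z(t)\big)^2\,dt\;\ge 0 .
\end{equation*}
This settles the lower bound and minimality at $h_z$.

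Uniqueness is the main obstacle. Suppose $d_z=0$. Because $w_z>0$ on the interior, $P_z=H_z$ almost everywhere on $\mathcal{T}$. Both functions are right-continuous and nondecreasing, so they agree everywhere on $[-T_a,T_h)$. The law on $\mathcal{T}\cup\{\varnothing\}$ is determined by this CDF together with the defect $1-P_z(T_h^-)$, which is read off as a left limit, so the two laws coincide. The one subtlety is the endpoint $T_h$ itself. Mass sitting exactly at $T_h$ is indistinguishable from $\varnothing$ by the score. I would resolve this by identifying onsets at $T_h$ with the atom, consistent with the paper's definition of the defect via $T_h^-$, or equivalently by treating $\mathcal{T}$ as half-open on the right. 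The statement then holds on that quotient space, and I would state the convention explicitly.

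For the \emph{iff} claim, $S_{\mathrm{T}}=1$ requires every term with $\pi(z)>0$ to have $d_z=0$, because the bracketed terms are each at most $1$ and the weights are positive. Uniqueness then gives $P_z=h_z$. For the final clause, note two cases. A deterministic forecast has a CDF that is a single step, which differs from $H_z$ whenever $h_z$ is dispersed, meaning it is not a point mass. A law supported on $[0,T_h]\cup\{\varnothing\}$ has $P_z\equiv 0$ on $[-T_a,0)$, while anticipatory mass means $H_z>0$ on a subinterval of $[-T_a,0)$. In both cases $P_z\ne h_z$, so the strict positivity of $d_z$ established above gives $S_{\mathrm{T}}<1$.
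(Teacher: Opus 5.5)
Your proof is correct and follows the paper's own argument: the pointwise bound on the integrand, the Tonelli/Fubini bias--variance split yielding $d_z(P_z\|h_z)=\int_{\mathcal{T}}w_z\,(P_z-H_z)^2\,dt$, and uniqueness from almost-everywhere equality plus right-continuity. You are slightly more careful than the paper on two points: you bound the divergence $d_z$ by $W_z$, not just the single-draw score, and you note that right-continuity only yields agreement on $[-T_a,T_h)$, so onset mass at $T_h$ must be identified with $\varnothing$, which is the convention implicit in the paper's left-limit definition of the defect.
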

\begin{proof}
Boundedness: for each $t$, $\big(P_z(t)-\mathbf{1}[y\le t]\big)^2\in[0,1]$ since $P_z(t)\in[0,1]$ and the indicator is in $\{0,1\}$; multiplying by $w_z(t)\ge0$ and integrating gives $\mathrm{twCRPS}\in[0,W_z]$, so $1-\mathrm{twCRPS}/W_z\in[0,1]$ and the convex combination \eqref{eq:stime} stays in $[0,1]$.

Strict propriety: write $H_z(t)=\Pr_{y\sim h_z}(y\le t)$ for the proper CDF of the continuous part on $\mathcal{T}$; the atom $\varnothing$ contributes $0$ to $\mathbf{1}[y\le t]$ at every finite $t$, hence $\mathbb{E}_{y\sim h_z}\mathbf{1}[y\le t]=H_z(t)$ for all $t\in\mathcal{T}$. For fixed $t$ the indicator is Bernoulli with mean $H_z(t)$, so the bias--variance identity gives $\mathbb{E}\big(P_z(t)-\mathbf{1}[y\le t]\big)^2=\big(P_z(t)-H_z(t)\big)^2+H_z(t)\big(1-H_z(t)\big)$. By Tonelli,
\[
\mathbb{E}_{y\sim h_z}[\mathrm{twCRPS}(P_z,y)]
=\int_{\mathcal{T}} w_z\,(P_z-H_z)^2\,dt
+ R_z,
\]
with $R_z=\int_{\mathcal{T}} w_z\,H_z(1-H_z)\,dt$
independent of $P_z$. Hence the divergence is $d_z(P_z\|h_z)=\int_{\mathcal{T}}w_z\,(P_z-H_z)^2\,dt\ge0$, with equality if and only if $P_z(t)=H_z(t)$ for Lebesgue-almost every $t$ in $\{w_z>0\}$. Since $w_z>0$ on the interior of $\mathcal{T}$ and both $P_z,H_z$ are right-continuous, $P_z=H_z$ everywhere on $\mathcal{T}$; this pins down $\lim_{t\uparrow T_h}P_z(t)=\lim_{t\uparrow T_h}H_z(t)=1-\beta_z$, so the no-response atom masses coincide and $P_z=h_z$ as mixed laws. Therefore $\mathbb{E}_{y\sim h_z}[\mathrm{twCRPS}]$ is uniquely minimized at $h_z$, i.e.\ the score is strictly proper, and $1-d_z/W_z=1$ exactly when $P_z=h_z$. A deterministic forecast has a step CDF and a law on the post-completion half-line has $P_z\equiv0$ on $(-\infty,0)$; either differs from a dispersed or anticipatory $h_z$ on a set of positive $w_z$-measure, forcing $d_z>0$. Summing over $z$ with $\pi(z)>0$ yields $S_{\mathrm{T}}\in[0,1]$ with $S_{\mathrm{T}}=1$ iff $P_z=h_z$ for all such $z$.
\end{proof}

\begin{corollary}[Composite strict propriety]
\label{cor:composite}
The length aggregate $S_{\mathrm{L}}$ of \eqref{eq:length} is a CRPS divergence and the overlap score is a bounded penalty, both in $[0,1]$; any fixed convex combination of $S_{\mathrm{T}}$, $S_{\mathrm{L}}$, and $S_{\mathrm{O}}$, and the unweighted mean of the four category scores, is a nonnegative weighted sum of strictly proper divergence terms, hence lies in $[0,1]$ and is maximized at the value $1$ exactly when the model reproduces every human conditional timing, duration, and overlap law it combines.
\end{corollary}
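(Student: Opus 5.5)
The plan is to treat the three component scores separately, bound each in $[0,1]$ and characterize when it equals $1$, and then pass to convex combinations with an elementary "a convex combination of numbers in $[0,1]$ equals $1$ iff every positively weighted term equals $1$" argument.

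\textbf{Length.} I would rerun the proof of Proposition~\ref{prop:bound} with $\mathcal{T}$ replaced by the duration support (of length $W^{\mathrm{L}}$) and with $w\equiv 1$. This weight is strictly positive and integrable, so the hypotheses hold verbatim. Writing $G_z$ for the CDF of $g_z$, the same bias--variance and Tonelli step gives
\[
d_z^{\mathrm{L}}(Q_z\|g_z)=\int (Q_z-G_z)^2\,dt .
\]
The integrand lies in $[0,1]$, so $0\le d_z^{\mathrm{L}}\le W^{\mathrm{L}}$. Each term $1-d_z^{\mathrm{L}}/W^{\mathrm{L}}$ is therefore in $[0,1]$, and by right-continuity it equals $1$ iff $Q_z=g_z$. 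Since the $\pi(z)$ are convex weights, $S_{\mathrm{L}}\in[0,1]$, and $S_{\mathrm{L}}=1$ iff $Q_z=g_z$ for every $z$ with $\pi(z)>0$.

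\textbf{Overlap.} For \eqref{eq:overlap}, $\psi\in[0,1]$, $d_o\ge0$ and $\kappa>0$, so the exponent is $\le 0$ and $s_{\mathrm{O}}(e)\in(0,1]$. It equals $1$ iff $\sum_o(1-\psi(\tau_o))d_o=0$, that is, iff every competitive overlap falls at an instant of full prosodic completion. The episode mean $S_{\mathrm{O}}$ inherits the bound $[0,1]$ and equals $1$ iff this holds almost surely over episodes.

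\textbf{Composites.} Each category score is a fixed convex combination of the components, so it lies in $[0,1]$. Because every component is at most $1$, the category score equals $1$ iff each positively weighted component equals $1$. The composite $S$ is an unweighted mean of the four category scores, so the same two facts carry over to $S$. I would also rewrite each component as $1$ minus a normalized nonnegative term ($d_z/W_z$, $d_z^{\mathrm{L}}/W^{\mathrm{L}}$, or $1-s_{\mathrm{O}}$). This exhibits $1-S$ as a nonnegative weighted sum, which matches the statement's phrasing.

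\textbf{Main obstacle.} The overlap term is not a strictly proper divergence against a human law. It is a penalty whose maximizer is any behaviour with zero weighted competitive overlap, so the clause "reproduces the overlap law" cannot be taken literally. I would first try to make the corollary precise by defining the target overlap law as the one putting all competitive-overlap mass on $\{\psi=1\}$, the set of full prosodic completion. Under that reading "reproduces" means exactly $s_{\mathrm{O}}=1$. I would state explicitly that strict propriety in the Proposition~\ref{prop:bound} sense is claimed only for the $S_{\mathrm{T}}$ and $S_{\mathrm{L}}$ parts.

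I would also add the hypothesis that each category's combination weights are strictly positive on the components the category "combines". Without it, the "exactly when" direction fails for a component carrying zero weight.
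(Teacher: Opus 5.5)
Your proposal is correct and follows the paper's proof essentially step for step. The length term is Proposition~\ref{prop:bound} rerun with $w\equiv 1$, the overlap score is the exponential penalty in $(0,1]$ that equals $1$ iff there is no low-completion competitive overlap, and the composite claim is the elementary fact that a convex combination of terms in $[0,1]$ equals $1$ iff every positively weighted term does; your two caveats (that the overlap term is not a strictly proper divergence, so ``reproducing the overlap law'' can only mean attaining $s_{\mathrm{O}}=1$, and that only positively weighted components are pinned down) are exactly what the paper uses implicitly through ``equality only at its own optimum'' and ``every active term,'' so stating them explicitly sharpens the argument rather than departing from it.
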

\begin{proof}
$S_{\mathrm{L}}$ is the unweighted twCRPS of Proposition~\ref{prop:bound} (weight $w\equiv1$) applied to durations, so it is strictly proper and in $[0,1]$ with unique maximizer $Q_z=g_z$; $s_{\mathrm{O}}=\exp(-\kappa\sum(1-\psi)d)\in(0,1]$ equals $1$ iff there is no low-completion competitive overlap. A convex combination $\sum_c\lambda_c S_c$ with $\lambda_c\ge0$, $\sum_c\lambda_c=1$ of terms each in $[0,1]$ lies in $[0,1]$, and since each $S_c\le1$ with equality only at its own optimum, the combination attains $1$ iff every active term does; the four-category mean is the special case of equal weights.
\end{proof}

\begin{proposition}[Reduction to binary fixed-window scoring]
\label{prop:reduction}
Let the posterior be degenerate at a single intent $z_0$ and the model emit the deterministic forecast $P_{z_0}=\mathbf{1}[t\ge y_e]$. Replacing the smooth weight $w_{z_0}$ by the indicator weight $w(t)=\delta\text{-limit at }W$, i.e.\ taking $w_\eta(t)=\eta^{-1}\mathbf{1}[W<t\le W+\eta]$ and letting $\eta\to0^+$, the normalized twCRPS score $1-\mathrm{twCRPS}/W_z$ converges to the response-within-window indicator $\mathbf{1}[\,0<y_e\le W\,]$ whose episode mean is exactly the takeover-style statistic of Full-Duplex-Bench \cite{lin2025full}; $W\to\infty$ recovers the unwindowed takeover rate, and the two-sided indicator weight on $\{-W'<t\le W\}$ recovers the windowed overlap statistics of \cite{lin2026full}.
\end{proposition}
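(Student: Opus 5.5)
The plan is to compute the limit directly: under the stated degeneracies, the twCRPS integral collapses to a single evaluation of the integrand at the window edge $W$. First I fix what plays the role of the realized continuation. Under the binary protocol of \cite{lin2025full}, a completed turn under the degenerate intent $z_0$ demands a take-over, so the target is $y^\star=0^+$, with $\mathbf{1}[y^\star\le t]=\mathbf{1}[t>0]$. The forecast is the step $P_{z_0}(t)=\mathbf{1}[t\ge y_e]$, so the integrand of \eqref{eq:twcrps} becomes $\big(\mathbf{1}[t\ge y_e]-\mathbf{1}[t>0]\big)^2$. This is itself an indicator: it equals $1$ exactly on the interval of thresholds between $0$ and $y_e$, and it is identically $0$ off that interval. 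The atom $y_e=\varnothing$ gives the forecast $\equiv 0$ on $\mathcal{T}$, so the integrand is $\mathbf{1}[t>0]$.

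Next I take the weights $w_\eta=\eta^{-1}\mathbf{1}[W<t\le W+\eta]$. These have total mass $W_\eta=1$, so the normalized score is $1-\eta^{-1}\int_W^{W+\eta}(\cdots)^2\,dt$. This is an average of a $\{0,1\}$-valued function that is right-continuous in $t$, since both indicators are right-continuous step functions. By right-continuity, or simply because the integrand is constant on $(W,W+\eta]$ once $\eta<|y_e-W|$, the average converges to the value at $W$. The limit is therefore $1-\big(\mathbf{1}[y_e\le W]-1\big)^2=\mathbf{1}[y_e\le W]$ for $W>0$. The non-generic case $y_e=W$ is covered by the closed right endpoint. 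I would also note that Proposition~\ref{prop:bound}'s positivity assumption on $w$ is deliberately dropped here; this is exactly the degenerate weight limit, and propriety is lost.

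The main obstacle is the lower constraint $0<y_e$. The one-sided limit only yields $\mathbf{1}[y_e\le W]$, and adding a second point mass at $0$ cannot produce a product of indicators, because the squared errors add. I would first try the two-sided indicator weight $w=\mathbf{1}[-W'<t\le W]$, normalized by $W+W'$. Under it, an onset $y_e\le 0$ pays for the whole overlap interval $(y_e,0]$. This is the mechanism the proposition's final clause attributes to \cite{lin2026full}, where overlap is scored as a separate windowed statistic. For the takeover statistic itself, I would instead follow the convention of \cite{lin2025full}: onsets before completion are counted as interruptions and excluded from the takeover category, so on the evaluated episodes $y_e>0$. The limit then equals $\mathbf{1}[0<y_e\le W]$, and averaging over episodes gives the takeover rate.

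Finally, letting $W\to\infty$ within $\mathcal{T}$, that is up to $T_h$, sends the indicator to $\mathbf{1}[y_e\neq\varnothing]$, which is the unwindowed takeover rate. Exchanging the limit with the episode mean is immediate because all terms are bounded in $[0,1]$.
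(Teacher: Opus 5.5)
Your proposal is correct and follows the paper's route: collapse the concentrated weight $w_\eta$ to a point evaluation of the integrand at $t=W$. It is more careful than the paper's one-line proof, which passes from ``the onset precedes $W$'' to $\mathbf{1}[0<y_e\le W]$ without comment, whereas you rightly note that the single-threshold limit sees $y_e$ only through $\mathbf{1}[y_e\le W]$ (an anticipatory onset $y_e\le 0$ scores $1$), so your explicit target $\mathbf{1}[t>0]$ and your exclusion of pre-completion onsets are necessary, not cosmetic.

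One aside is wrong: with weight $\delta_0+\delta_W$ the two squared errors are $\mathbf{1}[y_e\le 0]$ and $\mathbf{1}[y_e>W]$ (the latter counting $y_e=\varnothing$), indicators of disjoint events, so they sum to exactly $1-\mathbf{1}[0<y_e\le W]$. Additivity is therefore not the obstruction; only the normalization by total mass $2$ is, which leaves the score at $\tfrac12\bigl(1+\mathbf{1}[0<y_e\le W]\bigr)$, affine in the desired indicator.
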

\begin{proof}
Concentrating $w_\eta$ at the single threshold $t=W$ and normalizing, $1-\mathrm{twCRPS}/W_z$ tends to $1$ when the realized onset precedes $W$ (a response within the window) and to $0$ otherwise, that is to $\mathbf{1}[0<y_e\le W]$; abstention gives $P_{z_0}\equiv0$ and score $0$. The episode mean is the windowed takeover rate, $W\to\infty$ removes the upper edge, and the symmetric two-sided indicator weight recovers the overlap window, so the binary fixed-window metric is the degenerate indicator-weight limit of the twCRPS score.
\end{proof}

\begin{proposition}[Identifiability of the intent mixture]
\label{prop:ident}
If the per-intent laws $h_z$ are pairwise distinct ex-Gaussian/shifted-log-normal laws with an atom at $\varnothing$, intents are drawn i.i.d.\ from a speaker prior $\pi_s$, and the response law given $z$ does not otherwise depend on $s$, then $\pi_s$ and $\{h_z\}$ are identifiable up to label permutation from the per-speaker marginal response law.
\end{proposition}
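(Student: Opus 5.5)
The per-speaker marginal response law is the finite mixture
\[
m_s=\sum_{z\in\mathcal{Z}}\pi_s(z)\,h_z,\qquad h_z=(1-\beta_z)f_z+\beta_z\delta_{\varnothing}.
\]
The argument reduces identifiability of $(\pi_s,\{h_z\})$ to the classical theorem of Teicher (and Yakowitz--Spragins): finite mixtures from a family are identifiable exactly when the family is linearly independent over $\mathbb{R}$. The plan has three steps.

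\emph{Step 1: separate the atom.} Since $\varnothing\notin\mathcal{T}$, the decomposition $m_s=m_s^{c}+m_s(\{\varnothing\})\delta_\varnothing$ is unique. Its two parts are
\[
m_s(\{\varnothing\})=\sum_z\pi_s(z)\beta_z,\qquad m_s^{c}=\sum_z\pi_s(z)(1-\beta_z)f_z .
\]
So $m_s^{c}$ is itself a finite mixture of the continuous densities $f_z$, with weights $\pi_s(z)(1-\beta_z)$.

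\emph{Step 2: linear independence of the densities.} I will show that pairwise distinct members of the family $\mathcal{F}$ of ex-Gaussian and shifted-log-normal densities, restricted to $\mathcal{T}$, are linearly independent. The tool is tail-asymptotic ordering, but on the bounded domain $\mathcal{T}$ there is no tail. I therefore first argue analyticity:
\begin{itemize}
\item the ex-Gaussian density is real-analytic on $\mathbb{R}$;
\item the shifted log-normal is real-analytic on $(c,\infty)$, where $c$ is its shift.
\end{itemize}
Suppose $\sum_z a_z f_z=0$ on $\mathcal{T}$ (a.e., hence everywhere by continuity away from the shift points). Analytic continuation then extends the identity to $\mathbb{R}$, or to $(\max c,\infty)$.

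The continuation is valid only if the fitted log-normals' shifts lie inside $\mathcal{T}$. I will state this as an explicit hypothesis, since the kernels are fitted on $\mathcal{T}$. With the identity extended, I order components by right-tail decay and peel off terms one at a time:
\begin{itemize}
\item an ex-Gaussian decays like $e^{-t/\tau}$, so tails are ordered by $\tau$;
\item among equal $\tau$, the next-order behaviour $\exp(\mu/\tau+\sigma^2/2\tau^2)$ is a constant factor, so equal $\tau$ forces comparison of the remaining parameters through the Gaussian left tail;
\item a log-normal decays like $\exp(-(\log t)^2/2\sigma^2)$, slower than any exponential, so log-normals dominate and are ordered by $\sigma$, then by the median, then by the shift;
\item the left tail (ex-Gaussian Gaussian-type decay at $-\infty$; log-normal vanishing at its shift) separates the remaining ties.
\end{itemize}
The leading term's coefficient must vanish, and induction gives $a_z=0$ for all $z$. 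Adding $\delta_\varnothing$, which is mutually singular with all $f_z$, preserves linear independence of $\{h_z\}$, provided no two $h_z$ coincide. Distinctness of the $h_z$ implies distinctness of the pairs $(f_z,\beta_z)$, but two intents could share $f_z$ and differ only in $\beta_z$. I handle this by requiring the $f_z$ themselves to be pairwise distinct, which holds for the fitted parameters listed in Section~\ref{sec:laws}, and I flag it as the reading of the hypothesis.

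\emph{Step 3: conclude.} With linear independence in hand, Teicher's argument applies. If $\sum_z\pi_s(z)h_z=\sum_{z'}\pi'_s(z')h'_{z'}$ with both component sets drawn from the family, move everything to one side and group identical components. Linear independence of the union forces each distinct law to carry equal total weight on both sides, so the component sets and weights agree up to a relabelling of $\mathcal{Z}$. Since the response law given $z$ does not depend on $s$, the same $\{h_z\}$ serves every speaker. Pooling speakers pins the labels consistently, which gives identifiability of all $\pi_s$ simultaneously up to one common permutation.

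The main obstacle is Step 2: proving linear independence of a mixed ex-Gaussian/log-normal family when it is observed only on the bounded window $\mathcal{T}$. I would try analytic continuation first, and if the shift points fall inside $\mathcal{T}$, I would fall back on comparing singular behaviour at each shift point, where the log-normal density and all its derivatives vanish while the others do not.
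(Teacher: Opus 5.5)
Your Step 3 fails, and the failure is at the atom, not in Step 2. The grouping argument needs linear independence of the \emph{union} of the true components and any competing ones. Those competitors range over the class $\{(1-\beta)f+\beta\delta_\varnothing : f\in\mathcal F,\ \beta\in[0,1]\}$, and this class is not linearly independent. Every member has a component along the same $\delta_\varnothing$: for instance, $(1-\beta)f_1+\beta\delta_\varnothing$, $(1-\beta')f_1+\beta'\delta_\varnothing$, $(1-\beta)f_2+\beta\delta_\varnothing$ and $(1-\beta')f_2+\beta'\delta_\varnothing$ are four vectors in the three-dimensional span of $f_1,f_2,\delta_\varnothing$. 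The last sentence of your Step 2 shows that the six true $h_z$ are independent, but that is not what Teicher's argument uses.

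Your Step 1 plus independence of the $f_z$ identifies exactly three things: the $f_z$, the products $w_z=\pi_s(z)(1-\beta_z)$, and the atom mass $A=1-\sum_z w_z$. Any redistribution of $A$ across intents is observationally equivalent. Take any $r\ge 0$ with $\sum_z r_z=A$ and set $\pi'_s(z)=w_z+r_z$, $\beta'_z=r_z/(w_z+r_z)$; this reproduces $m_s$. Whenever $A>0$ this fibre is $(|\mathcal Z|-1)$-dimensional. Concretely, with two intents and $f_1\neq f_2$, the choices $(\pi_s(1),\beta_1,\beta_2)=(0.5,0.2,0.2)$ and $(0.6,\tfrac13,0)$ both give $m_s=0.4f_1+0.4f_2+0.2\,\delta_\varnothing$. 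The same $f$ sits in the same slot, so no relabelling reconciles them. Your extra requirement that the $f_z$ be pairwise distinct does not help.

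The paper's proof makes the identical reduction (``atom masses separate from continuous parts by mutual singularity, so it suffices that the continuous components be linearly independent''). So the statement itself needs an extra hypothesis, and you should add one rather than inherit the gap. There are two possible repairs.
\begin{itemize}
\item Treat the $\beta_z\in[0,1)$ as known; the paper sets them from annotator silence acceptability in Section~\ref{sec:laws}. Then $\pi_s(z)=w_z/(1-\beta_z)$.
\item Make your pooling step do real work rather than just align labels. Put $c_z=1/(1-\beta_z)$. Each speaker contributes the linear equation $\sum_z w_{s,z}c_z=1$. This system has the true $c$ as a solution, and it is the unique solution if and only if the speaker-by-intent matrix $(\pi_s(z))_{s,z}$ has rank $|\mathcal Z|$. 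Then $\beta_z=1-1/c_z$ and $\pi_s(z)=w_{s,z}c_z$ follow. This contradicts the paper's gloss that $\pi_s$ is recoverable from one speaker's episodes alone.
\end{itemize}

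On the continuous part, your analytic-continuation route differs from the paper's characteristic-function and residue sketch. It is more careful about the bounded window $\mathcal T$, which the paper's use of untruncated characteristic functions ignores. One correction there: at $+\infty$, two log-normals differing only in shift have ratio tending to $1$. So shifts cannot be separated by tail dominance; you need your support-edge fallback, which is the paper's ``separated by their support edge''.
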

\begin{proof}
Atom masses separate from continuous parts by mutual singularity, so it suffices that the continuous components be linearly independent. The ex-Gaussian characteristic function $\exp(i\mu\omega-\sigma^2\omega^2/2)(1-i\tau\omega)^{-1}$ has a unique simple pole at $\omega=-i/\tau$; ordering by $\tau$ and taking residues rules out nontrivial vanishing combinations, Gaussian-mixture identifiability handles equal-$\tau$ groups, and shifted log-normals are separated by their support edge. Linear independence makes $\pi_s$ unique, licensing the memory profiles: $\pi_s$ is recoverable from that speaker's episodes and nothing less.
\end{proof}

\subsection{Reconciling the Population Optimum with the Empirical Topline}
\label{sec:reconcile}
Proposition~\ref{prop:bound} fixes the population optimum at $S_{\mathrm{T}}=1$, attained only by $h_z$, yet the empirical human topline in Table~\ref{tab:main} is $0.86$ because the per-episode estimator scores a single held-out continuation $y_e$ against the reference. Two finite-sample effects separate it from $1$: each episode offers one realized continuation, so even a forecaster equal to $h_z$ pays the irreducible single-draw term $R_z/W_z>0$ from the proof above; and a held-out speaker's timing deviates from any pooled $h_z$, a gap Proposition~\ref{prop:ident} shows shrinks only with that speaker's own evidence. Together they account for the $0.14$ shortfall, so the topline is a principled estimate of the population optimum.

\begin{table*}[t]
\caption{Main results on TACT by Full-Duplex-Bench scenario category (Pause: pause handling, Backch.: backchanneling, Smooth: smooth turn-taking, Interr.: user interruption); the composite is the category mean with 95\% BCa bootstrap intervals. These are scenario types, distinct from the six latent intents of Fig.~\ref{fig:heatmap}; in particular the Backch.\ scenario is not the \textsc{bck} intent. Ranks compare TACT and reproduced binary v1-style orderings (Spearman 0.55).}
\label{tab:main}
\centering
\scriptsize
\renewcommand{\arraystretch}{0.84}
\setlength{\tabcolsep}{4.2pt}
\begin{tabular}{lcccccc}
\toprule
System & Pause & Backch. & Smooth & Interr. & Composite [95\% CI] & Rank (TACT / v1) \\
\midrule
dGSLM \cite{nguyen2023generative} & 0.21 & 0.28 & 0.27 & 0.20 & 0.24 [0.22, 0.26] & 11 / 11 \\
Moshi \cite{defossez2024moshi} & 0.22 & 0.25 & 0.41 & 0.31 & 0.30 [0.28, 0.32] & 10 / 2 \\
SyncLLM \cite{veluri2024beyond} & 0.26 & 0.33 & 0.37 & 0.28 & 0.31 [0.29, 0.33] & 9 / 10 \\
Freeze-Omni \cite{wang2025freeze} & 0.35 & 0.27 & 0.36 & 0.34 & 0.33 [0.31, 0.35] & 8 / 9 \\
Qwen2.5-Omni \cite{xu2025qwen} & 0.36 & 0.29 & 0.40 & 0.35 & 0.35 [0.33, 0.37] & 7 / 8 \\
MiniCPM-o 2.6 \cite{openbmb2025minicpmo} & 0.37 & 0.30 & 0.42 & 0.36 & 0.36 [0.34, 0.38] & 6 / 6 \\
PersonaPlex \cite{roy2026personaplex} & 0.40 & 0.31 & 0.45 & 0.36 & 0.38 [0.36, 0.40] & 5 / 5 \\
GPT-4o Realtime \cite{openai2024gpt4o} & 0.44 & 0.35 & 0.49 & 0.43 & 0.43 [0.41, 0.45] & 4 / 4 \\
gemini-3.1-flash-live-minimal \cite{google2026geminilive} & 0.45 & 0.33 & 0.52 & 0.45 & 0.44 [0.42, 0.46] & 3 / 1 \\
gemini-3.1-flash-live-high \cite{google2026geminilive} & 0.52 & 0.36 & 0.48 & 0.47 & 0.46 [0.44, 0.48] & 2 / 7 \\
gpt-realtime-2 \cite{openai2026gptrealtime2} & 0.50 & 0.38 & 0.53 & 0.46 & 0.47 [0.45, 0.49] & 1 / 3 \\
\midrule
VAP-gated cascade \cite{ekstedt2022voice,inoue2024realtime} & 0.44 & 0.20 & 0.47 & 0.28 & 0.35 [0.33, 0.37] & -- \\
Never respond & 0.42 & 0.10 & 0.06 & 0.14 & 0.18 [0.17, 0.19] & -- \\
Human topline & 0.84 & 0.87 & 0.88 & 0.85 & 0.86 [0.84, 0.88] & -- \\
\bottomrule
\end{tabular}
\end{table*}

\section{Experimental Setup}
\label{sec:setup}

We evaluate eleven systems: open models dGSLM \cite{nguyen2023generative}, Moshi \cite{defossez2024moshi}, SyncLLM \cite{veluri2024beyond}, Freeze-Omni \cite{wang2025freeze}, Qwen2.5-Omni \cite{xu2025qwen}, MiniCPM-o~2.6 \cite{openbmb2025minicpmo}, PersonaPlex \cite{roy2026personaplex}, and proprietary points GPT-4o Realtime \cite{openai2024gpt4o}, gpt-realtime-2 \cite{openai2026gptrealtime2}, Gemini 3.1 Flash Live \cite{google2026geminilive} at minimal and high thinking. Open models run locally in streaming mode, proprietary systems via realtime APIs \cite{lin2026full}; prompt models receive synopses, dGSLM priming audio. Reference baselines include a VAP-gated cascade \cite{ekstedt2022voice,inoue2024realtime}, a never-respond policy, and the human topline. Reproduced binary v1 metrics \cite{lin2025full} match published numbers (Moshi pause takeover $>0.98$, median latency $\approx0.26$~s). The validity study uses 1,280 held-out episodes judged by five raters in 88 system-condition cells.

\section{Results}
\label{sec:results}

\subsection{Main Comparison}
Table~\ref{tab:main} reports main results. The gap to humans is large: the best system, gpt-realtime-2, attains 0.47 vs 0.86 human topline, while binary v1 metrics make systems look adequate. Rankings reorder substantially (correlation 0.55): Moshi, second under binary scoring due to 0.26~s latency, falls to tenth because instant takeover fails on \textsc{ref}, \textsc{rhe}, and \textsc{hold}. Gemini points expose a trade-off invisible to binary scoring: minimal responds fastest (0.31~s) and tops v1, while high waits median 0.74~s, falling to seventh under v1 yet overtaking it on TACT. The VAP cascade matches mid-tier models (0.35) and beats open models on timing, but degrades on backchannels.

\subsection{The Over-Eagerness Pathology}
The scenario categories of Table~\ref{tab:main} and the intents of Fig.~\ref{fig:heatmap} are orthogonal cuts: each scenario composite is the prior-weighted average of the Fig.~\ref{fig:heatmap} cells in its rows. Over-eagerness is traceable through \textsc{ref}, \textsc{rhe}, and \textsc{hold}. All eleven systems do best on \textsc{ans} (0.38--0.78) and collapse on \textsc{ref} (0.09--0.36) and \textsc{rhe} (0.06--0.30). Humans respond within 1~s in 78\% of \textsc{ans} episodes but only 12\% of \textsc{ref} and 7\% of \textsc{rhe}; Moshi does so in 95--97\% and conservative models in 62--76\%, withholding the floor on \textsc{rhe} in $<4\%$ (open) to 24\% (best closed) vs 74\% for humans. Rigidity is two-sided: humans launch 63\% of invited backchannels in overlap, vs 14\% for PersonaPlex and $<6\%$ elsewhere. Models fail to be slow when reflection is warranted or early when invited, recurring as length miscalibration (backchannels receive median 8.4~s turns vs 1.1~s human).

\begin{figure}[t]
\centering
\includegraphics[width=0.88\columnwidth]{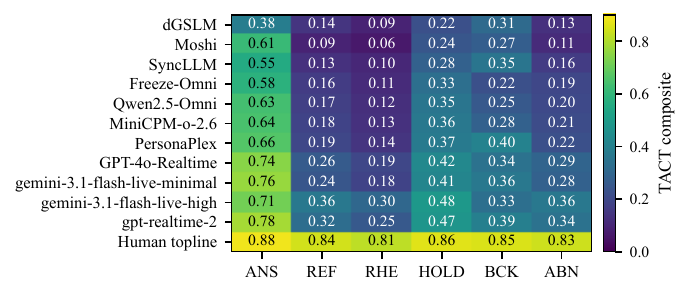}
\caption{Each cell is the TACT composite $S$ of \eqref{eq:stime} restricted to episodes whose posterior mode is the column intent (the intent-conditioned composite; intent-view, orthogonal to the scenario-category axis of Table~\ref{tab:main}). Systems approach humans only on \textsc{ans} and degrade two- to ten-fold on \textsc{ref}, \textsc{rhe}, and \textsc{abn}; the overlap-optimal \textsc{bck} column shows the anticipatory side of the rigidity.}
\label{fig:heatmap}
\end{figure}

\subsection{Memory Ablation: No Per-Speaker Adaptation}
Re-running each system with profiles swapped across speakers of opposite behavioral type, we measure $\mathrm{JSD}_2$ between onset distributions under true and swapped profiles. Prompt-capable models receive the prompt synopsis, Moshi receives profile features through its synchronized text stream, and dGSLM receives matched priming audio. Held-out human behavior shifts substantially ($\mathrm{JSD}_2=0.23$) while every system is essentially invariant ($\Delta S \le 0.01$ and $\mathrm{JSD}_2 \le 0.024$ across all eleven systems, with 0.024 for gpt-realtime-2 and 0.003 for dGSLM being the swap extrema), demonstrating that current models settle on a speaker-independent policy where human behavior is speaker-conditioned \cite{sap2022neural,kim2023fantom}.

\subsection{Metric Validity}
Over 88 cells judged by held-out humans, TACT reaches Spearman $\rho=0.81$ ($[0.77,0.84]$), vs $0.46$ for binary composite and $0.39$ for latency. An intent-conditioned binary baseline attains only $\rho=0.62$, showing that continuous twCRPS is essential: fixed windows cannot distinguish timing within windows or represent anticipatory, delayed, and absent onsets jointly. Ablating memory, calibration, or prosody yields $\rho=0.71$, $0.78$, and $0.74$. Calibration reduces ECE from 8.4\% to 2.1\% ($\lambda=0.3$).

\subsection{Fairness and Robustness Slices}
Composites by slice show the human topline flat across gender, age, first-language region, SNR, and articulation rate (0.84--0.87). Systems are not: the best loses 0.07 between North American and Indian-English speakers and 0.06 between high- and mid-SNR conditions (concentrated in timing), with smaller gender (0.02) and age (0.05) gaps.

\subsection{Limitations}
The six-class taxonomy discretizes a continuum and $\alpha=0.73$ leaves label noise; kernels are fit on English corpora, though the released recipe enables refitting where norms differ \cite{stivers2009universals,inoue2024multilingual}; the judge in \eqref{eq:post} may share failure modes (an annotator-only mode ships); and anticipatory scoring leans on VAP completion estimates $\psi$ that released marks would replace.

\section{Conclusion}
\label{sec:conclusion}

TACT replaces binary fixed-window turn-taking with a strictly proper, intent-conditioned twCRPS over human floor-transfer-offset laws, per-speaker memory, and a calibrated posterior; data, kernels, and prompts are available in the Supplementary Material. When to speak is an inference over a speaker's latent intent, not a latency problem.

\section*{Acknowledgment}
The authors disclose that Claude Opus 4.8 was used for editing and rewriting all sections (Abstract, Introduction, Related Work, Methods, Experiments, Results, Conclusion) to improve the flow and presentation, and for assisting in implementation of the code.

\clearpage
\bibliographystyle{IEEEtran}
\bibliography{refs}

\end{document}